\theoremstyle{thmstyleone}%
\newtheorem{theorem}{Theorem}
\newtheorem{proposition}[theorem]{Proposition}%
\newtheorem{assumption}[theorem]{assumption}%
\theoremstyle{thmstyletwo}%
\theoremstyle{thmstylethree}%
\begin{document}

\title[Article Title]{Rethinking Information Loss in Medical Image Segmentation with
Various-sized Targets}


\author[1,4]{\fnm{Tianyi} \sur{Liu}}\email{tianyi.liu2203@student.xjtlu.edu.cn}

\author[2,4]{\fnm{Zhaorui} \sur{Tan}}\email{zhaorui.tan21@student.xjtlu.edu.cn}

\author[3]{\fnm{Kaizhu} \sur{Huang}}\email{kaizhu.huang@dukekunshan.edu.cn}

\author*[1]{\fnm{Haochuan} \sur{Jiang}}\email{h.jiang@xjtlu.edu.cn}

\affil*[1]{\orgdiv{School of Robotics, XJTLU Entrepreneur College (Taicang)}, \orgname{Xi’an Jiaotong-Liverpool University}, \orgaddress{\street{111 Taicang Road, Taicang}, \city{Suzhou}, \postcode{215123}, \state{Jiangsu}, \country{China}}}

\affil[2]{\orgdiv{School of Intelligent Science}, \orgname{Xi’an Jiaotong-Liverpool University}, \orgaddress{\street{111 Ren’ai Road, Suzhou Industrial Park}, \city{Suzhou}, \postcode{215123}, \state{Jiangsu}, \country{China}}}

\affil[3]{\orgdiv{Institute of Applied Physical Sciences
and Engineering}, \orgname{Duke Kunshan University}, \orgaddress{\street{No. 8 Duke Avenue}, \city{Suzhou}, \postcode{215316}, \state{Jiangsu}, \country{China}}}

\affil[4]{\orgdiv{School of Computer Science}, \orgname{University of Liverpool}, \orgaddress{\street{Brownlow Hill}, \city{Liverpool}, \postcode{L697ZX},\country{United Kingdom}}}


\abstract{Medical image segmentation presents the challenge of segmenting various-size targets, demanding the model to effectively capture both local and global information. Despite recent efforts using CNNs and ViTs to predict annotations of different scales, these approaches often struggle to effectively balance the detection of targets across varying sizes. Simply utilizing local information from CNNs and global relationships from ViTs without considering potential significant divergence in latent feature distributions may result in substantial information loss. To address this issue, in this paper, we will introduce a novel Stagger Network (SNet) and argues that a well-designed fusion structure can mitigate the divergence in latent feature distributions between CNNs and ViTs, thereby reducing information loss. Specifically, to emphasize both global dependencies and local focus, we design a Parallel Module to bridge the semantic gap. Meanwhile, we propose the Stagger Module, trying to fuse the selected features that are more semantically similar. An Information Recovery Module is further adopted to recover complementary information back to the network. As a key contribution, we theoretically analyze that the proposed parallel and stagger strategies would lead to less information loss, thus certifying the SNet's rationale. Experimental results clearly proved that the proposed SNet excels comparisons with recent SOTAs in segmenting on the Synapse dataset where targets are in various sizes. Besides, it also demonstrates superiority on the ACDC and the MoNuSeg datasets where targets are with more consistent dimensions.}

\keywords{Medical image segmentation, Feature Fusion, Information Loss, CNN, Transformer}



\maketitle

\section{Introduction}\label{sec:introduction}

Medical image segmentation has drawn  much attention from deep learning society~\cite{cai2020review,wang2023variable,su2023mind,yao2022novel}.
Accurate and generalized segmentation on various-sized targets, requiring capturing both local and global information for various-sized targets, will greatly assist radiologists in making treatment planning and post-treatment evaluations.

In the past few years, Convolutional Neural Networks (CNNs) have been widely used in medical image segmentation tasks. Focusing on local features~\cite{chang2021transclaw}, CNN-based models such as U-Net~\cite{falk2019u}, nnUNet~\cite{isensee2019nnu} and Res-UNet~\cite{diakogiannis2020resunet} are effective for smaller target predictions such as gallbladders and tissue aortas.
Despite successes achieved by CNN models in segmenting smaller targets, they are still restricted due to limited receptive fields and inherent inductive bias. With a weak ability to capture long-range dependency, CNNs still suffer from a lack of efficacy in predicting targets with relatively larger sizes, such as the livers and spleens.  
Vision Transformers (ViTs), on the other hand, are capable of learning long-range dependencies and capturing global contexts by using a multi-head self-attention mechanism.
They exhibit high precision in segmenting larger targets such as livers and spleens ~\cite{dosovitskiy2020image, valanarasu2021medical}. 
As for small targets, we statistically reveal (see Table~\ref{visual}) that although ViTs can segment certain them such as the kidneys, they fail to predict some other targets such as the gallbladders and tissue aortas. 

\begin{figure}[t!]
\centerline{\includegraphics[width=0.8\columnwidth]{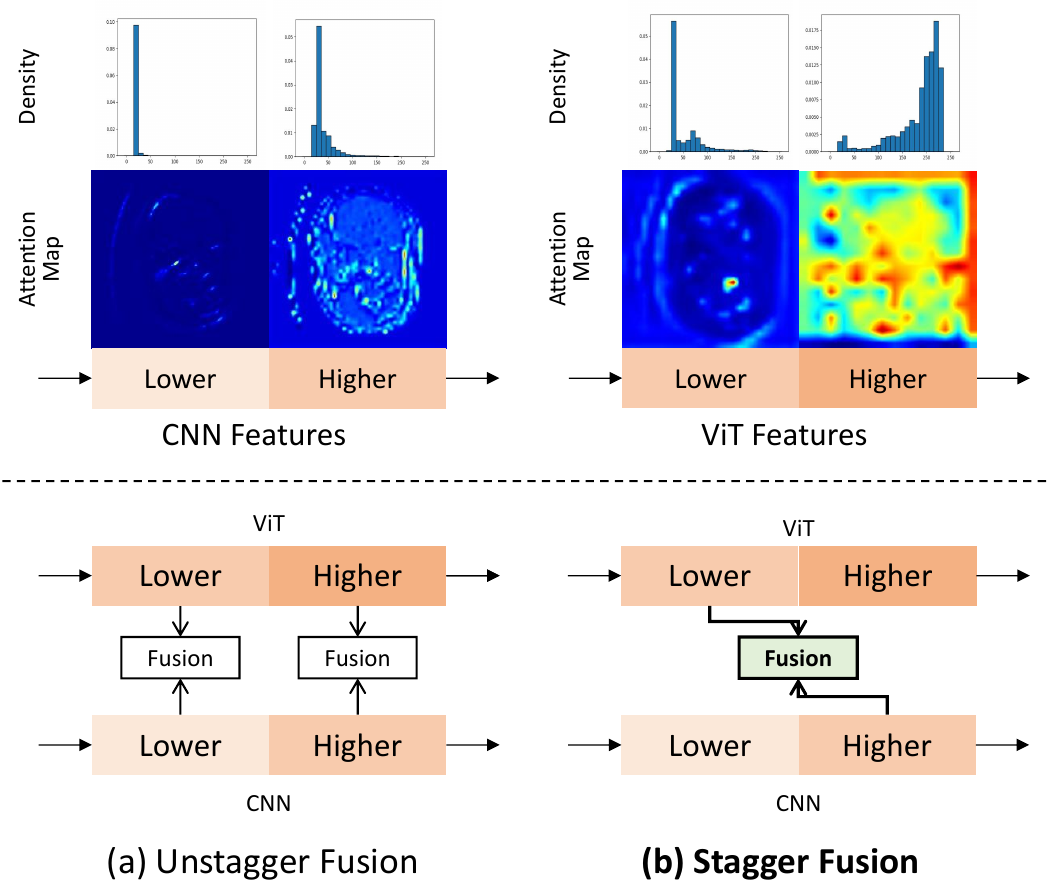}}
\caption{(Top) Visualization of feature heatmaps and histogram distributions of lower CNNs, higher CNNs, lower ViTs, and higher ViTs. Higher layers have a darker color than lower layers.
(Bottom) Unstagger fusion fuses lower layers of CNNs with those from lower ViTs, as well as features from higher layers of CNNs with those from higher ViTs. Stagger fusion fuses features from higher layers of CNNs and those from lower ViTs. Different colors represent dissimilar distributions of these feature layers.}
\label{representation}
\end{figure}

The aforementioned findings suggest that CNNs and ViTs offer complementary segmentation performance across larger and smaller targets.
Intuitively, latent features from both models can be fused to effectively predict targets of various sizes, expecting they can achieve simultaneous advantages.
Prior efforts in the literature such as TransAttUNet~\cite{chen2021transattunet}, Attention Upsample~\cite{li2021more}, and TransUnet~\cite{chen2021transunet} employ fusion modules to combine extracted features from both CNNs and ViTs. 
However, these arts fuse features in the unstagger manner, \textit{i.e.} features obtained from 
lower layers of CNNs and ViTs are fused (the same with higher-layer features, see Figure~\ref{representation}).
Our study reveals that these unstagger approaches overlook different modeling characteristics of each layer. This oversight can result in sub-optimal performance when segmenting targets with various sizes due to potential information loss. 
As shown in the bottom line in Figure~\ref{banner}~(a), attention maps of features from higher CNN and ViT layers appear distinctly. Higher CNN layers
focus on parts of the image, whereas higher ViT layers concentrate on more expanded regions. Moreover, most of the input attention focuses are weakened in the fused features; this suggests that information loss may possibly occur across two individual features, resulting in degraded segmentation. Similarly, fusion across features from lower CNN and ViT layers is also not ideal because they focus on distinct parts.

\begin{figure}[!t]
\centerline{\includegraphics[width=\columnwidth]{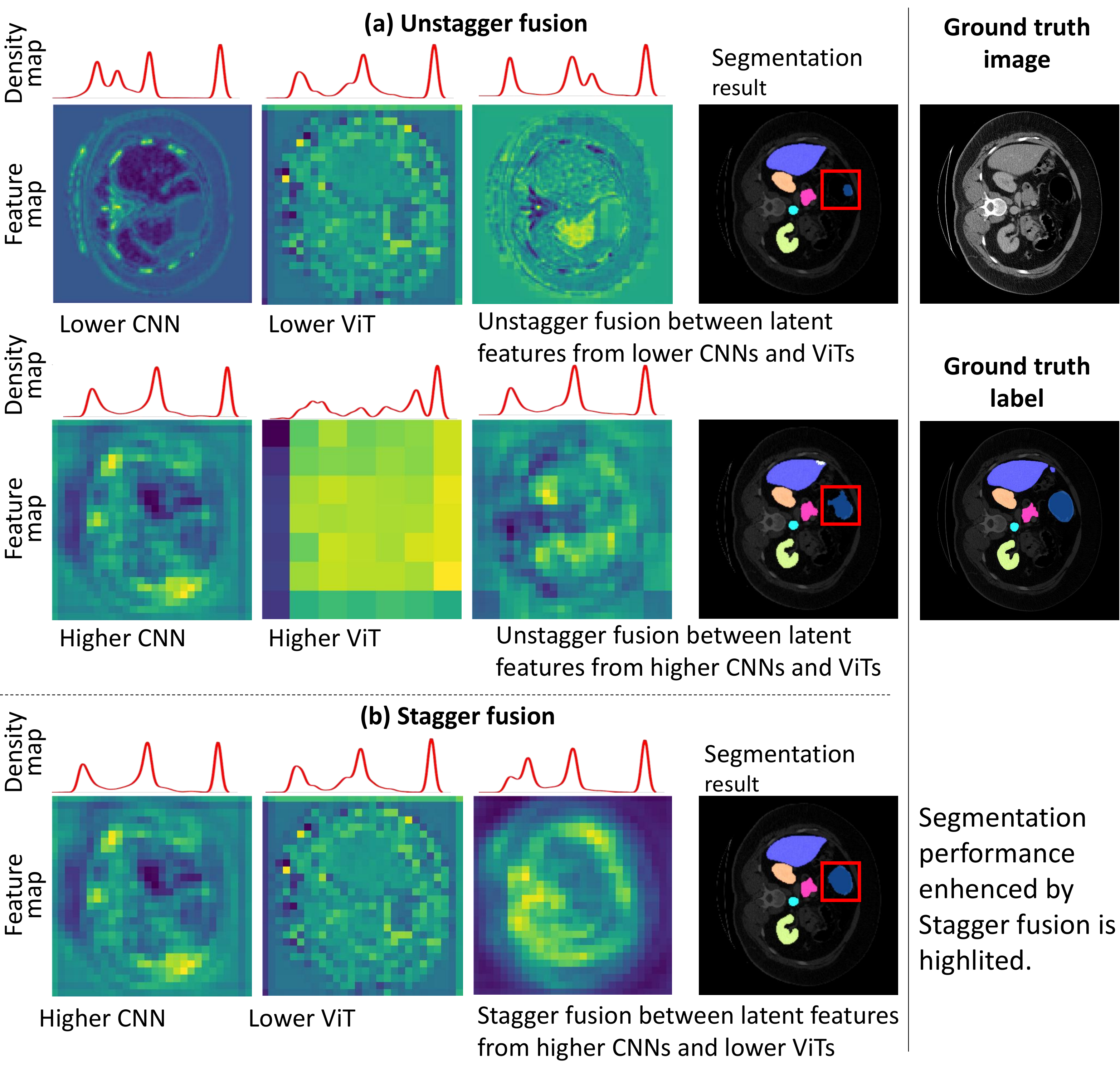}}
\caption{This figure depicts unstagger fusion in  (a) and stagger fusion in (b). 
Heatmaps visualizing input layers are presented on the first two columns, with the name of each layer located at the bottom and its corresponding density map situated above the heatmap. 
The heat maps and density maps of fusion results are illustrated in the third column. 
The segmentation results of each fusion method can be seen in the fourth column, and the input image and its ground-truth label can be seen in the fifth column. 
}
\label{banner}
\end{figure}
Drawing inspiration from the above observations, we argue that the information loss may be induced by large divergence across latent feature distributions. Empirically, as seen in the attention maps and density histograms in Figure~\ref{representation}, features from both CNNs and ViTs appear significantly different in the unstagger setting. Theoretically, we analyze that this non-negligible information loss is possibly brought by the unstagger fusion architecture (see Section~\ref{sec:inforloss} for details).

In this paper, we propose a novel model called Stagger Network (SNet) to tackle the information loss during feature fusion and promote segmentation performance across targets of various sizes. 
Specially,
it consists of three major modules: the Stagger Module, the Parallel Module, and the Information Recovery Module.
The Stagger Module with the feature fusion block achieves the core function to fuse the latent features from lower ViTs and higher CNNs in the stagger manner. With key theoretical evidence, we prove that this Stagger Module is more effective in reducing information loss in comparison with unstagger approaches. 
Additionally, we propose the Parallel Module with the feature enhancement block and the Information Recovery Module with the global attention block working as assisting components. 
In the Parallel Module, two series of consecutive enhanced features are produced by parallel CNN and ViT branches. The produced results will be sent to the Stagger Module. The Information Recovery Module, regarded as a feature decoder, further enhances fused information from the Stagger Module. 
Furthermore, as a unified network, the proposed SNet fuses the information from the CNN-based and ViT-based encoders and only employs the CNN-based decoder. 
It will save computational resources compared to using two distinct models to segment large and small targets separately. 

The major contributions of this paper are summarized as follows:
\begin{itemize}
    \item 
    We propose a novel Stagger Network with three modules: Parallel Module,  Stagger Module, and  Information Recovery Module. It can successfully segment both small and large medical imaging targets simultaneously.
    \item We theoretically show that the proposed Stagger Network combining higher CNNs and lower ViTs features will be superior to unstagger approaches, as it reduces information loss and promotes fusion efficacy.
    \item Extensive experiments demonstrate the effectiveness of our Stagger Network, not only showcasing significant improvements in predicting small targets but also ensuring high performance for larger targets. Specifically, SNet significantly improves the prediction score on both small targets by $9\%$ over SOTA on benchmarks Synapse~\cite{synapse}. It also outperforms over SOTA on ACDC~\cite{bernard2018deep} and MoNuSeg dataset~\cite{kumar2017dataset}. 
\end{itemize}

\section{Related Work}
\subsection{CNNs and ViTs}
CNNs in U-Net~\cite{ronneberger2015u} are particularly efficient in extracting local features in medical image segmentation. Transformers, on the other hand, excel at capturing long-range dependencies in sequences, though they are initially designed for language processing tasks~\cite{vaswani2017attention}. 
The first attempt to introduce transformers in vision tasks is known as the Vision Transformer (ViT)~\cite{dosovitskiy2020image}, achieving the state-of-the-art performance on the benchmark image classification dataset, the ImageNet~\cite{deng2009imagenet}.
Recent progress has also demonstrated successes with ViT variants in conventional computer vision (e.g., detection and segmentation) tasks, including DERT~\cite{carion2020end} and SegFormer~\cite{xie2021segformer}. TransUNet~\cite{chen2021transunet}, a ViT-based model, also shows its outstanding performance in the task of medical image segmentation.

\subsection{Feature Fusion Methods}
It is reasonable to combine CNNs with ViTs so that both strengths can be leveraged. In the following, we will give examples of typical cases that are promising to understand both local focus and long-range context.  
\subsubsection{Simple Replacement Methods}
One simple way to introduce ViTs in conventional CNNs is to 
replace some convolution Layers in a CNN model with some ViT blocks,
for example, TransClaw U-Net~\cite{chang2021transclaw}, Attention Upsample (AU)~\cite{li2021more}, Swin-Unet~\cite{cao2021swin} and TransAttUNet~\cite{chen2021transattunet}.
Concretely, in the TransClaw U-Net, ViTs are introduced in the higher encoding layers to replace CNNs; 
in the Attention Upsample (AU), window-based ViTs are placed in the decoding path, while the generating features are concatenated with encoding CNN features by the skip-connections; Swin-Unet replaces all CNNs with ViTs and constructs a pure ViT-based U-Net in that simple replacement of CNNs with ViTs cannot make full use of the advantages of CNN and Transformers~\cite{zhang2021transfuse}. The ability of CNN to locate low-level details may be lost when modeling global contexts.

\subsubsection{Advanced Fusion Methods}
There are also fusion proposals to explore the mutual relationship between the features generated by CNNs and ViTs. Typical examples can be found in Missformer~\cite{huang2021missformer} and Transfuse~\cite{zhang2021transfuse}.
In particular, an enhanced Transformer Context Bridge is employed in the Missformer~\cite{huang2021missformer} which introduces depth-wise CNNs in the Transformer blocks to model remote dependencies and local contexts.  
Although it fuses features that suit both CNNs and ViTs and demonstrates excellent performance in large targets, it does not present superiority in small targets, e.g. aorta, gallbladder, and kidneys,  empirically.  
Meanwhile, Transfuse~\cite{zhang2021transfuse} features two parallel ViT and CNN branches by feeding the same size features to the proposed BiFusion module in an unstagger fusion (see Figure~\ref{banner}) with a self-attention mechanism. 
However, it combines CNNs and ViTs unstaggeringly without considering the distinctive feature representation of each other, resulting in possible semantic gaps. 
On the contrary, our proposed SNet designs the stagger fusion strategy, promoting the fusion of features with similar representations, thus effectively alleviating information loss.

\section{Theoretical Motivation of Stagger Fusion}
\label{sec:inforloss}

Consider two discrete random variables from distributions $f^{a}\sim P^{a}, f^{b} \sim P^{b}$ as 
the latent features of CNN and ViT,
where $a$ and $b$ denote dimensions, and $P^a$ and $P^b$ denote their distributions respectively. 
The joint entropy is determined by the marginal distributions of multiple random variables and their joint distribution. Minimizing joint entropy involves finding a joint distribution that enhances the certainty of relationships among variables. In this paper, we endeavor to minimize the joint entropy of CNN and ViT, thereby reducing the uncertainty of these two joint distributions and mitigating information loss during the fusion process.
To achieve better fusion between $f^a$ and $f^b$, we set the optimization objective toward a lower joint entropy $H(f^a,f^b)$ between them:
\begin{equation}
\label{entropy}
H(f^a,f^b) = H(f^a) + H(f^b) - I(f^a;f^b),
\end{equation}
where  $H(f^a)$ and $H(f^b)$ are entropy of $f^a$ and $f^b$, and $I(f^a;f^b)$ is the mutual information of $f^a$ and $f^b$,
given that $H(f^a)$ and $H(f^b)$ remain relatively stable \textit{w.r.t.} $f^a$ and $f^b$, the primary objective becomes maximizing $I(f^a;f^b)$.

However, the blend of latent features from the lower CNN and ViT layers or from higher CNN and ViT layers may decrease $I(f^a;f^b)$.
As seen from Figure~\ref{representation}~(a), lower CNN layers pay more attention to local parts, whilst lower ViT layers will focus more on global representations. 
Previous work~\cite{raghu2021vision} also shows lower CNN (e.g. Resnet) and ViT (e.g. ViT L/16) features have large feature distribution divergences. 
As such, combining the lower features of both CNNs and ViTs would magnify $H(f^a,f^b)$, resulting in loss of information.
So do the feature distributions of higher ViTs and CNNs. We provide a theoretical analysis as follows.

\begin{assumption}
\label{ass:dim}
We denote $f^{n^*}\sim P^{n^*}$ as an optimal fused feature between $f^{a}$ and $f^{b}$  with $n^*$-dimensions, where $n^* \in [\max(a,b), a+b]$; $a, b$ denote respective dimensions and $P^a$ and $P^b$ denotes their distributions. 
\end{assumption}

Assumption~\ref{ass:dim} holds because $n^* = a+b$ \textit{iff} $P^{a}$ and $P^{b}$ are absolutely independent of each other; $n^* = \max(a,b)$ \textit{iff} one of $P^{a}$ and $P^{b}$ is fully dependent on the other one.
Consider a fusion operation  $\mathcal{F}$ that can maintain all information in $f^a, f^b$.
According to  \textit{Jensen's inequality}~\cite{jensen1906fonctions}, we have:
\begin{align}
\label{eq:H_ineq}
    H(f^{n^*}) = H(\mathcal{F}(f^a, f^b)) \leq H(f^a, f^b),
\end{align}
where $H(f^a, f^b)$ can be considered as an upper bound of $H(f^{n^*})$. 

However, the absolute optimal solution $f^{n^*}$ is hard to be obtained. 
In the fusion model setting, it pursues sub-optimal solutions, generating the fused feature $f^n \sim P^n$  dimensionalized by $n$.
Be noted that $n$ is pre-defined by the model structure as a hyper-parameter. 
In common scenarios, setting $n > \max(a,b)$ is necessary to avoid possible information loss.
Setting $n < a+b$ because they are easy to correlate to some extent since $f^a$ and $f^b$ are the features extracted from the same input image.
Since Eq.~\ref{eq:H_ineq} only holds when $\mathcal{F}$ bring no information loss,
we identify that when the model structure is fixed, finding a proper latent layer for $\mathcal{F}$
that can obtain the lower $H(f^a, f^b)$ is crucial for the fusion operation.

Before we propose our main proposition, we first elaborate \textit{Han's inequality}:
\begin{theorem}[Han's inequality~\cite{boucheronconcentration}] 
\label{hans}
The Han's inequality is presented below: Let $X^i$ be discrete $i$-dimensional random variable and denote $\bar{H}^k\left(X^i\right)=\frac{1}{\left(^i_k \right)} \sum_{T \subset \left(^{[i]}_{\;k} \right)} H(X_{T})$
as the average entropy of randomly selected $k$ dimensions $(k\leq i)$. Then $\frac{1}{k} \bar{H}^k$ is decreasing in $k$: 
\begin{align}
\label{eq:han}
    \frac{1}{i} \bar{H}^i \leq \cdots \leq \frac{1}{k} \bar{H}^k \cdots \leq \bar{H}^1.
\end{align}
\end{theorem}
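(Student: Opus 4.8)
The plan is to derive the entire chain in~\eqref{eq:han} from the single neighbouring inequality
\begin{equation*}
\frac{1}{k}\,\bar{H}^k(X^i)\;\le\;\frac{1}{k-1}\,\bar{H}^{k-1}(X^i)\qquad(2\le k\le i),
\end{equation*}
applied successively for $k=i,i-1,\dots,2$, with $\bar{H}^1(X^i)$ sitting at the top of the chain. The argument uses only two standard facts about Shannon entropy, which I will invoke freely: the chain rule $H(X_T)=H(X_{T\setminus\{j\}})+H(X_j\mid X_{T\setminus\{j\}})$ for any $j\in T$, and \emph{conditioning reduces entropy}, i.e.\ $H(X_j\mid X_S)\le H(X_j\mid X_{S'})$ whenever $S'\subseteq S$.

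\textbf{Step 1 (a pointwise estimate on each $k$-set).} Fix a $k$-element subset $T\subseteq[i]$. Summing the chain rule over all $j\in T$ gives
\begin{equation*}
k\,H(X_T)=\sum_{j\in T}H(X_{T\setminus\{j\}})+\sum_{j\in T}H(X_j\mid X_{T\setminus\{j\}}).
\end{equation*}
Next I would expand $H(X_T)$ once more by the chain rule along an arbitrary fixed ordering of $T$ and bound each partially-conditioned term below by the fully-conditioned one; since conditioning reduces entropy, this yields $\sum_{j\in T}H(X_j\mid X_{T\setminus\{j\}})\le H(X_T)$. Substituting into the previous display gives the key estimate $(k-1)\,H(X_T)\le\sum_{j\in T}H(X_{T\setminus\{j\}})$.

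\textbf{Step 2 (averaging by double counting).} Sum this estimate over all $\binom{i}{k}$ subsets $T$ of size $k$. The left-hand side becomes $(k-1)\binom{i}{k}\,\bar{H}^k(X^i)$. On the right, a fixed $(k-1)$-element set $S$ is produced exactly once by each extension $T=S\cup\{j\}$ with $j\in[i]\setminus S$, hence $i-k+1$ times, so the right-hand side equals $(i-k+1)\binom{i}{k-1}\,\bar{H}^{k-1}(X^i)$. Using the identity $\binom{i}{k}=\binom{i}{k-1}\,\frac{i-k+1}{k}$ and cancelling the positive factor $\binom{i}{k-1}(i-k+1)$ leaves precisely $\frac{k-1}{k}\,\bar{H}^k(X^i)\le\bar{H}^{k-1}(X^i)$, which is the neighbouring inequality, and the full chain follows.

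\textbf{Anticipated obstacle.} There is no heavy computation here; the one delicate point is the sub-additive bound $\sum_{j\in T}H(X_j\mid X_{T\setminus\{j\}})\le H(X_T)$ used in Step 1, which must be obtained by telescoping $H(X_T)$ via the chain rule and dominating each conditional term by its fully-conditioned counterpart — it does \emph{not} follow from ordinary sub-additivity of entropy, which points the other way. The only other thing to get exactly right is the multiplicity in Step 2, namely the ``handshake'' observation that every $(k-1)$-subset arises from $i-k+1$ of the $k$-subsets; once that and the binomial identity are in place, the statement collapses out.
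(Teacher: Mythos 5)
The paper does not prove this statement at all: Han's inequality is quoted as a known theorem with a citation to the reference \cite{boucheronconcentration}, so there is no in-paper proof to compare against. Your argument is correct and is exactly the standard proof given in that reference: the per-subset estimate $(k-1)\,H(X_T)\le\sum_{j\in T}H(X_{T\setminus\{j\}})$, obtained by combining the chain rule with the fact that conditioning reduces entropy (and you rightly flag that the bound $\sum_{j\in T}H(X_j\mid X_{T\setminus\{j\}})\le H(X_T)$ comes from telescoping, not from subadditivity), followed by the double-counting average over all $k$-subsets with multiplicity $i-k+1$ and the identity $\binom{i}{k}=\binom{i}{k-1}\frac{i-k+1}{k}$, which yields the neighbouring inequality $\frac{1}{k}\bar{H}^k\le\frac{1}{k-1}\bar{H}^{k-1}$ and hence the full chain. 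No gaps.
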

Eq.~\ref{eq:han} indicates that the mean entropy on each dimension decreases as the number of $k$ increases. 
Based on \textit{Han's inequality}, we have the Proposition~\ref{prop:fusion}:
\begin{proposition}
\label{prop:fusion}
     When $n$ is fixed, i.e., the fusion model structure is fixed and Assumption~\ref{ass:dim} holds, the information loss depends on what $f^a$ and $f^b$ from model latent layers are selected. Specifically,
     information can be lost when the divergence between distributions of $f^a$ and $f^b$ is large.
\end{proposition}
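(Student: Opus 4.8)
The plan is to measure the information loss incurred by the practical fusion $\mathcal{F}$ as the entropy gap $\Delta = H(f^a,f^b) - H(f^n)$ with $f^n = \mathcal{F}(f^a,f^b)$, and to show that $\Delta$ is forced to be positive, and tends to grow, exactly when the divergence between $P^a$ and $P^b$ is large. Since $f^n$ is a deterministic function of $(f^a,f^b)$, we always have $\Delta \ge 0$, with $\Delta = 0$ precisely when $\mathcal{F}$ loses no information, i.e.\ when equality holds in Eq.~\ref{eq:H_ineq}.

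First I would convert the divergence hypothesis into a statement about $H(f^a,f^b)$. By Eq.~\ref{entropy}, $H(f^a,f^b) = H(f^a) + H(f^b) - I(f^a;f^b)$, and since $H(f^a)$ and $H(f^b)$ are treated as essentially fixed, a large divergence between $P^a$ and $P^b$ corresponds to weak statistical dependence between the two features, hence a small $I(f^a;f^b)$ and therefore a large $H(f^a,f^b)$; this matches the empirical observations in Figure~\ref{representation} and the distribution gaps reported in~\cite{raghu2021vision}. At the independent extreme $I(f^a;f^b) = 0$ and, by Assumption~\ref{ass:dim}, the minimal dimension of a lossless fused feature is $n^* = a+b$; at the fully dependent extreme $I(f^a;f^b) = \min(H(f^a),H(f^b))$ and $n^* = \max(a,b)$. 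Hence increasing the divergence drives $n^*$ from $\max(a,b)$ toward $a+b$ while simultaneously enlarging $H(f^a,f^b)$.

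Next I would use that $n$ is a fixed hyper-parameter with $\max(a,b) < n < a+b$. Whenever the divergence is large enough that $n^* > n$, no $n$-dimensional feature can losslessly encode $(f^a,f^b)$ (otherwise $n^*$ would not be minimal), so $H(f^n) < H(f^{n^*}) = H(f^a,f^b)$ and $\Delta > 0$: information is necessarily lost. To size this loss I would apply Han's inequality (Theorem~\ref{hans}) to the optimal $n^*$-dimensional variable $f^{n^*}$: since $\bar{H}^{n^*}(f^{n^*}) = H(f^{n^*}) = H(f^a,f^b)$, Eq.~\ref{eq:han} gives $\tfrac{1}{n}\bar{H}^{n}(f^{n^*}) \ge \tfrac{1}{n^*}H(f^a,f^b)$, i.e.\ the most informative $n$-coordinate view of $f^{n^*}$ already retains at least a fraction $n/n^*$ of the total entropy, so the loss after any $n$-dimensional fusion can reach $\Delta \le (1 - \tfrac{n}{n^*})H(f^a,f^b) = \tfrac{n^* - n}{n^*}\,H(f^a,f^b)$. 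Both factors $\tfrac{n^*-n}{n^*}$ and $H(f^a,f^b)$ increase with the divergence between $P^a$ and $P^b$ while $n$ is held fixed, so the room for information loss grows with that divergence, whereas it collapses to zero once the divergence is small enough that $n^* \le n$ — which is exactly the claim of the proposition, and the reason a stagger fusion that pairs layers of smaller divergence is preferable.

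The step I expect to be the main obstacle is the first one: the passage from ``large divergence between $P^a$ and $P^b$'' to ``small $I(f^a;f^b)$ and large $n^*$'' is not a theorem in general, since mutual information and distributional divergence need not be monotonically related unless $f^a$ and $f^b$ are tied to a common source. I would therefore justify this link either under a mild generative assumption — both features are views of the same input through receptive structures whose overlap shrinks as their distributions diverge — or, as the paper does, empirically through the attention maps and density histograms of Figure~\ref{representation}. A secondary caveat is that Han's inequality controls the \emph{average} entropy over $n$-subsets rather than the single best one, so the quantitative bound on $\Delta$ should be read as a tendency (hence the wording ``information \emph{can} be lost'') rather than a sharp identity; tightening it would require additional structural assumptions on $\mathcal{F}$.
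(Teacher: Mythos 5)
Your proposal is correct for the proposition as stated, but it reaches the conclusion by a noticeably different route than the paper. The paper's proof is a three-case analysis keyed to where $n^*$ falls relative to the fixed $n$: it compares the normalized average entropies $\frac{1}{n^*}\bar{H}^{n^*}(f^{n^*})$ and $\frac{1}{n}\bar{H}^{n}(f^{n})$ directly (Eq.~\ref{eq:2}, Eq.~\ref{eq:3}, Eq.~\ref{eq:1}), declaring loss ``likely'' in the near-independent regime and unlikely in the strongly dependent one, with Theorem~\ref{hans} cited to justify the monotonicity of per-dimension entropy. You instead define the loss explicitly as $\Delta = H(f^a,f^b) - H(f^n)$, prove $\Delta>0$ whenever $n^* > n$ by the minimality of $n^*$ in Assumption~\ref{ass:dim} (no $n$-dimensional feature can losslessly encode the pair), and only then use Han's inequality to bound how much loss is available, recovering the paper's regimes as the endpoints. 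This buys you two things the paper's argument lacks: a crisper ``loss must occur'' step that does not require comparing entropies of two different random vectors (strictly, Theorem~\ref{hans} only orders coordinate-subset entropies of a single vector, so the paper's inequalities between $f^{n^*}$ and $f^n$ are heuristic), and an explicit quantitative handle $\frac{n^*-n}{n^*}H(f^a,f^b)$. Two caveats you already flag are worth keeping: the passage from large distributional divergence to small $I(f^a;f^b)$ is an empirical/modeling assumption (shared with the paper, via Figure~\ref{representation} and the cited CKA study), and your Han-based bound controls the best achievable (average-subset) loss rather than the loss of an arbitrary fixed fusion $\mathcal{F}$, so it should be presented as bounding the attainable loss rather than the realized one; the existence of loss itself rests on your minimality argument, not on Han.
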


\begin{proof}
    We denote $\bar{H}^n\left(f^n\right), \bar{H}^{n^*}\left(f^{n^*}\right)$ as the 
    average entropy of all dimensions of $f^n, f^{n^*}$, respectively.
    We discuss three situations:

     1). If dimensions in $P^{a}$ and $P^{b}$ are most independent, it has
    $n < n^* \leq a+b$ and $ n^* = a+b$ when $P^{a}$ and $P^{b}$ are fully independent from each other.
    Under this circumstance, it has:
    \begin{align}\label{eq:2}
        \frac{1}{n^*} \bar{H}^{n^*} < \frac{1}{n} \bar{H}^{n}, 
    \end{align}
    where information will be likely to be lost.

     2). If proper $f^a$ and $f^b$ are used and $n \approx n^*$, it approaches the optimal fusion solution without information loss:        
    \begin{align}\label{eq:3}
        \frac{1}{n^*} \bar{H}^{n^*} \approx \frac{1}{n} \bar{H}^{n}{.}
        \end{align}

    3). If $P^{a}$, $P^b$ largely depend on each other, it has $\max(a,b) \leq n^* <n$ and $\max(a,b) = n^*$ \textit{iff}  $P^{a}$ ($P^b$) is fully depends on $P^{b}$ ($P^{a}$) or vice versa. In this scenario, following Theorem~\ref{hans}, the following holds: 
    \begin{align}\label{eq:1}
        \frac{1}{n^*} \bar{H}^{n^*} > \frac{1}{n} \bar{H}^{n},
    \end{align}
    where information will be unlikely to be lost.

      {Therefore Proposition~\ref{prop:fusion} holds.}
   
    \end{proof}

Proposition~\ref{prop:fusion} indicates that when the model structure is fixed, finding latent feature space for conducting fusion methods is critical. Our experiments reveal that using unstagger fusion proposed in previous methods tends to be under the scenario in Eq.~\ref{eq:2} since there are significant differences in distributions, leading to sub-optimal results. To tackle this problem, we propose Stagger Module (in Sec.~\ref{stager}) to ensure that the fusion meets the scenarios where Eq.~\ref{eq:3} and Eq.~\ref{eq:1} hold. Furthermore, extensive experiments validate that our Stagger Module performs efficient fusion and outperforms the previous methods.

\begin{figure}[!t]
\centerline{\includegraphics[width=\columnwidth]{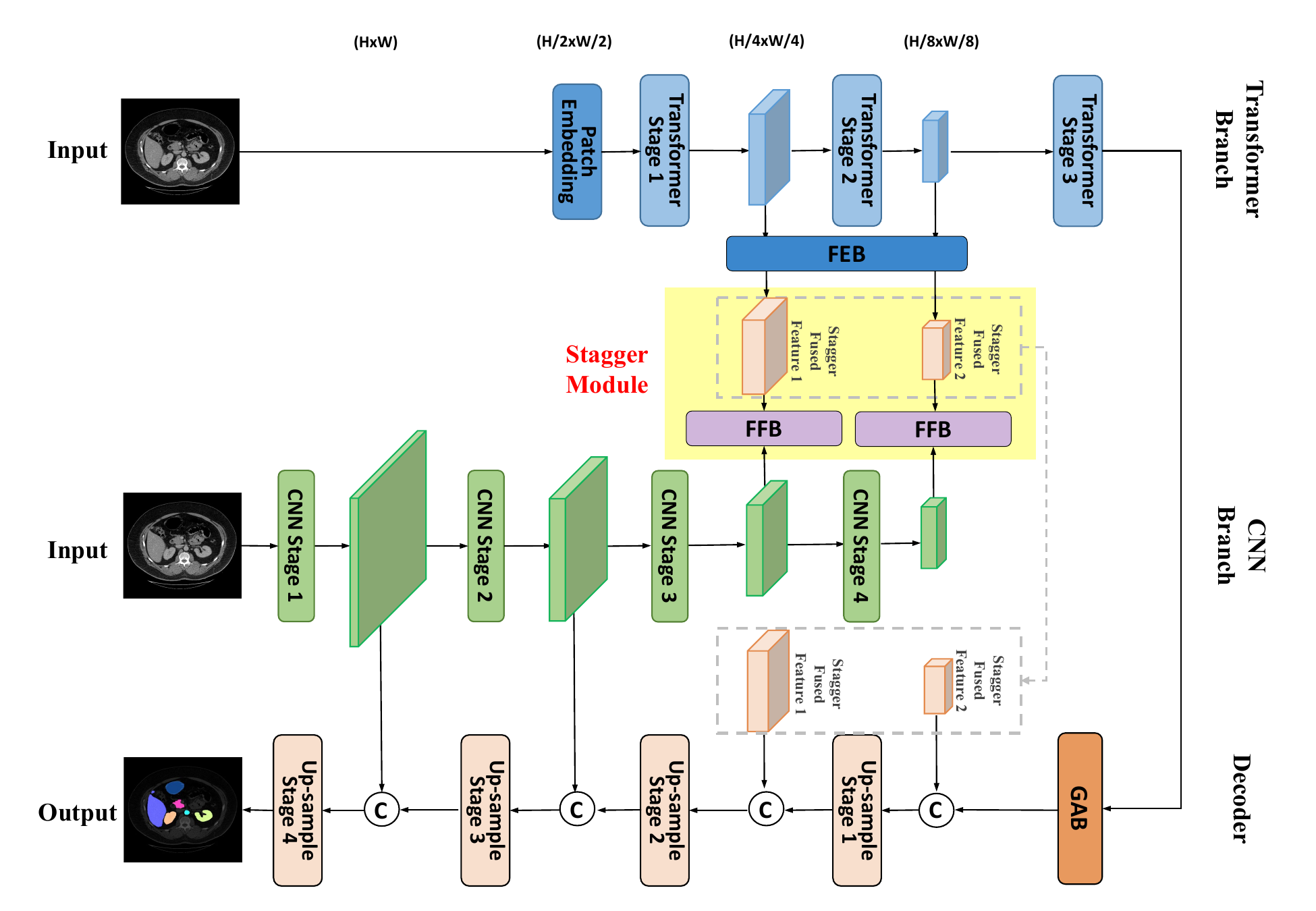}}
\caption{The overall framework of SNet. The label C in a circle means concatenate.
}
\label{are}
\end{figure}

\section{Methodology}
\subsection{Overview}
In this section, we will introduce the architecture of the proposed SNet with the Parallel Module in~Sec.~\ref{sec:pp}, Stagger Module in~Sec.~\ref{sec:sp}, and Information Recovery Module in~Sec.~\ref{sec:irp}. The overall architecture can be seen in Figure~\ref{are}. 
In the Parallel Module, there are two branches with Feature Enhancement Block (FEB), \textit{i.e.,} CNN and ViT branches.
They generate two sets of features, where each set is made up of features generated by consecutive CNNs or ViTs in distinct branches.
As mentioned in Sec.~\ref{sec:inforloss}, decreasing the information loss is the main objective. Therefore, the Stagger Module is the main module. In this module, lower ViTs and higher CNNs will be fused in the Feature Fusion Block (FFB). The fused information is enhanced in the Global Attention Block (GAB) in the Information Recovery Module. This module functions as the decoder, thereby completing the entire U-Net structure.

\subsection{Parallel Module}
\label{sec:pp}
The proposed SNet consists of two parallel branches: a CNN branch and a ViT branch. It exploits and optimizes inherent advantages from both architectures, as well as achieving comprehensive feature representations. 
To be specific, at the start, the input  will be sent to both the CNN and ViT branches.
In the CNN branch, features will be convoluted and down-sampled serially. 
Differently, in the ViT branch, features are extracted by vision transformers, before being down-sampled by patch embedding~\cite{dosovitskiy2020image}.
In the Parallel Module, we employ the Feature Enhancement Block (FEB) to decrease entropy of ViT latent features, $H(f^b)$. It ensures more concise and information-rich feature representations and helps decrease the joint entropy $H(f^a,f^b)$ (Eq.~\ref{entropy}).

\begin{figure}[]
\centerline{\includegraphics[width=0.55\columnwidth]{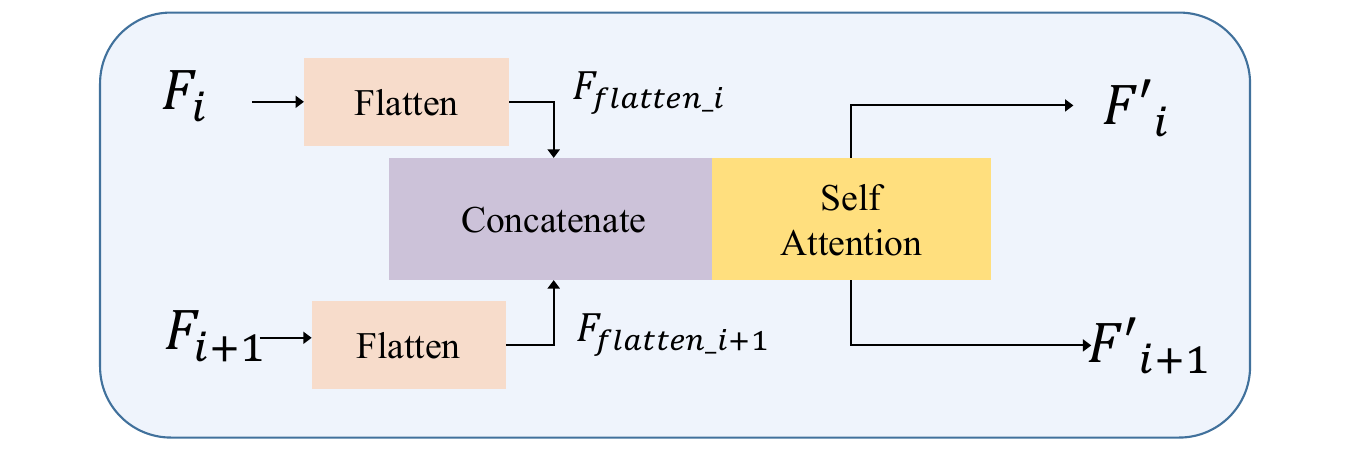}}
\caption{Feature Enhancement Module (FEB): As seen in Figure~\ref{are}, the raw image is the input of the 1st ViT layer. After the down-sampling, the output of the 1st ViT layer becomes the input of the 2nd ViT layer. Then the output of the 1st and 2nd ViT layer $F_i$ and $F_{i+1}$ will be fused in FEB and then split back to two features $F^{'}_{i}$ and $F^{'}_{i+1}$. }
\label{FEB}
\end{figure}

\textbf{Feature Enhancement Block (FEB):}
FEB collects features from two consecutive ViT layers $F_i$ and $F_{i+1}$, and they will be flattened to vectors 
$F_{Flatten_i}$ $\in$ $\mathbb{R}^{N \times C}$, where
$C$ is the number of channels, $N_{i} = \frac{HW}{2^{2i}}$, and $i$ denotes the $i$-th encoding layer. 
As shown in Figure~\ref{FEB},  $F_{Flatten_i}$ and $F_{Flatten_{i+1}}$ are concatenated before being fed into the self-attention block to calculate interactions between each other and maximize the global representation in the FEB. 
After that, the produced vectors will be unflattened back to two feature maps $F^{'}_{i}$ and $F^{'}_{i+1}$ with identical dimensions of $F_{i}$ and $F_{i+1}$ before they are sent to two individual Depth-wise CNNs (DW-Conv) to prevent losing local relationships. The enhanced $F_i$ and $F_{i+1}$ will be fused with features from CNN branch in the Stagger Module, discussed in Sec.~\ref{stager}.

\subsection{Stagger Module}
\label{stager}
As mentioned in Sec.~\ref{sec:inforloss}, in the Stagger Module, features from lower ViTs and higher CNNs in the Parallel Module will be fused by using the stagger fusion method to minimize information loss. 
Features from lower ViT and higher CNN layers are selected ($F_C$ and $F_T$) to be fused by calculating when the calculated KL divergence is lower, indicating they are similar in distributions and less information loss (shown in Figure~\ref{representation} and discussed in Sec.~\ref{sec:inforloss}).

\begin{figure}[]
\centerline{\includegraphics[width=0.55\columnwidth]{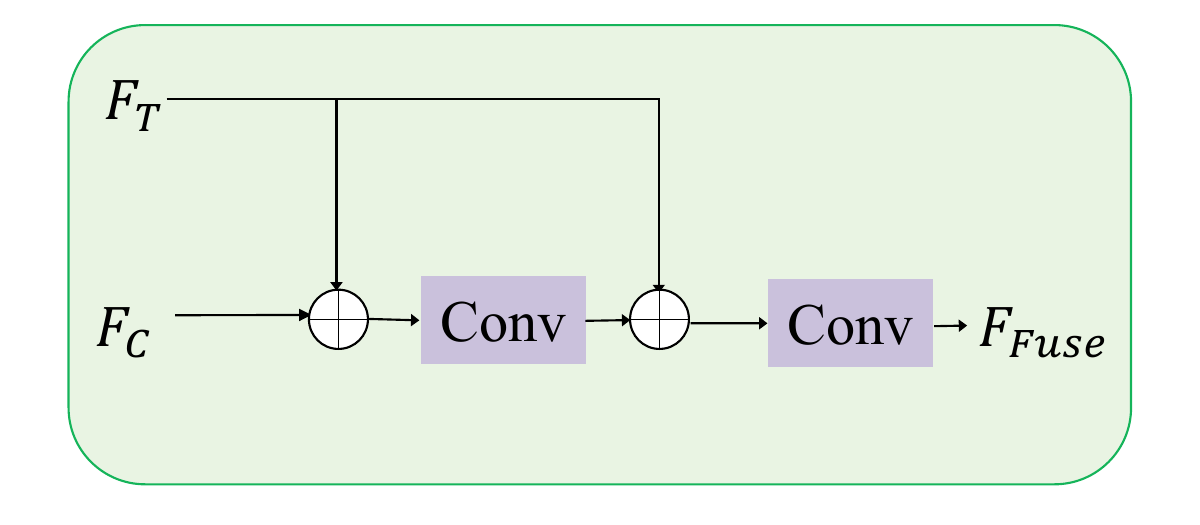}}
\caption{Feature Fusion Block (FFB): $\bigoplus$ means concatenation.}
\label{FFB}
\end{figure}

\label{sec:sp}
\textbf{Feature Fusion Block (FFB):}
As seen in Figure~\ref{FFB}, the proposed FFB blocks receive fused features extracted by higher CNN layers represented as $F_C$ and lower ViT layers represented as $F_T$ from the Parallel Module.  
The number of channels of ViT features sent to the FFB is four times smaller than that of CNN features. 
If the fusion is implemented by simple concatenation, contributions of CNN features relative to ViT features will be improved, potentially resulting in an excessive reliance on CNN information.
The fusion will then tend to favor features from CNNs and neglect information from ViTs. 

To address it, in the proposed FFB, CNN ($F_C \in \mathbb{R}^{HW \times 4d}$) and ViT ($F_{T}\in \mathbb{R}^{HW{\times}{d}}$) features will be firstly concatenated to produce an initial fused feature  
$ F_{1}\in \mathbb{R}^{HW{\times}{5d}}$, given that $d$ is the number of channels of input CNN features. 
After that, a DW-Conv with batch normalization and GeLU nonlinearity (Conv-BN-GeLU) will be applied to this concatenated map, producing 
$F_2 \in \mathbb{R}^{HW \times 2d}$. 
Then, $F_2$ will be concatenated with $F_C$ again before sent to another Conv-BN-GeLU again to produce 
$F_{Fuse} \in \mathbb{R}^{HW \times 2d}$.
In this sense, features from both CNN and ViT will be mostly balanced. At the same time, the dimension of the fused feature is in the range of $d, 5d$, as we stated in Assumption~\ref{ass:dim}. 
After that, the fused feature will be sent to the information recovery to be part of the decoder inputs.

\subsection{Information Recovery Module}
\label{sec:irp}
In contrast to the fusion modules used in the Parallel and Stagger Modules, we use CNN blocks in the Information Recovery Module to up-sample the
extracted deep features. The up-sampling operation similar to the U-Net decoder reshapes the feature maps of adjacent dimensions into a higher-resolution feature map and reduces the feature dimension to half of the original dimension accordingly.

\textbf{Global Attention Block (GAB):}
\begin{figure}[]
\centerline{\includegraphics[width=0.55\columnwidth]{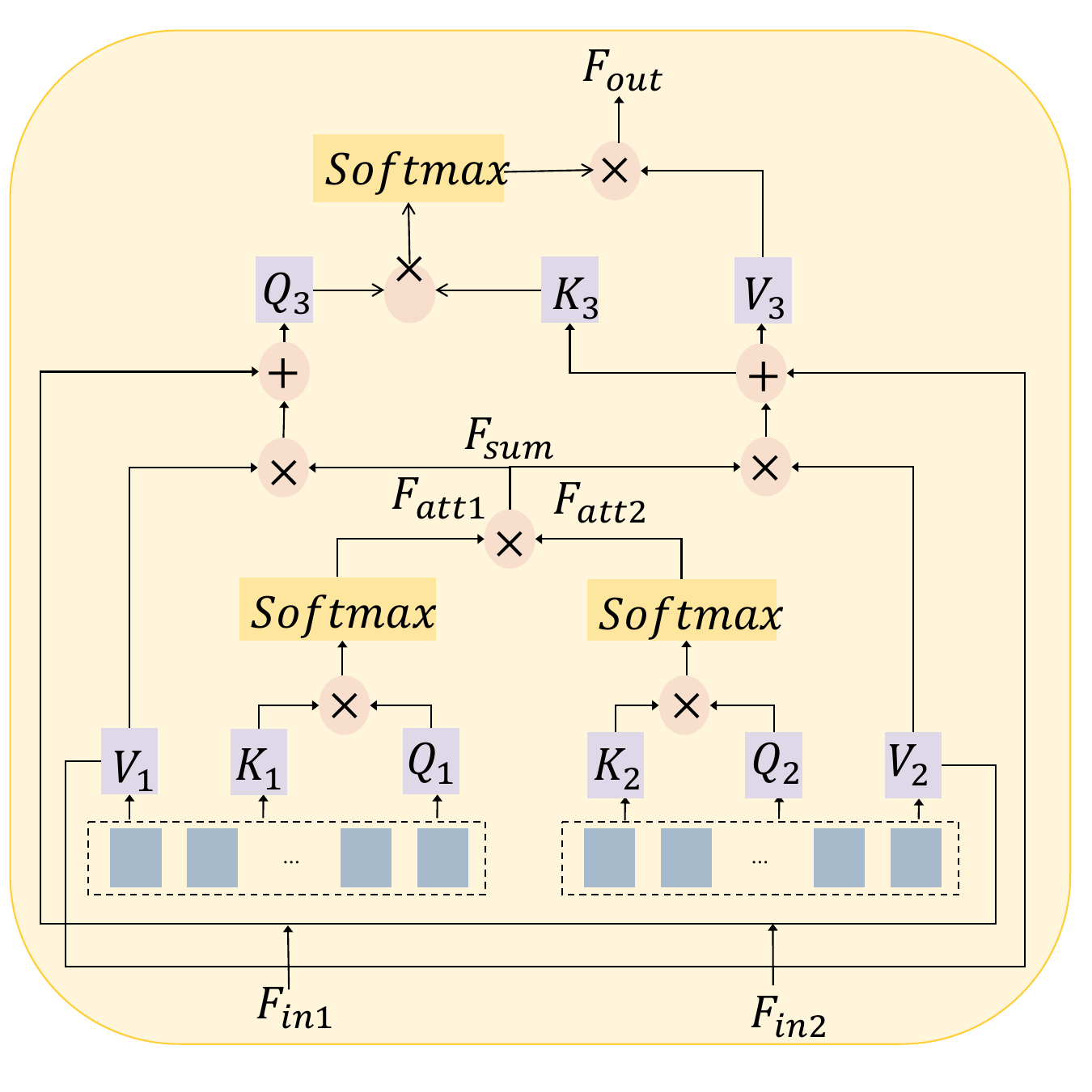}}
\caption{Global Attention Block (GAB): The input of GAM is the third ViT layer and the fourth ViT layer ($F_{in1}$ from $L_1$ and $F_{in2}$ from $L_2$ respectively) which is simplified in Figure~\ref{are}.}
\label{GAM}
\end{figure}
To recover the information from higher ViT layers that have not been processed before, a novel Global Attention Block (GAB), based on the idea from~\cite{aghdam2022attention}, will be engaged in the proposed SNet. It is essentially a two-layer attention mechanism.
Queries ($Q_1$ and $Q_2$ respectively for $L_1$ and $L_2$), keys ($K_1$ and $K_2$), and values ($V_1$ and $V_2$) are obtained by $Q_1=V_1=K_1=F_{in1}$ and $Q_2=V_2=K_2=F_{in2}$. 
Then, a new attention map will be calculated by 
\begin{equation}
F_{sum} = softmax(\frac{Q_1 K_1^T}{\sqrt{d}}) + softmax(\frac{Q_2 K_2^T}{\sqrt{d}}) {,}
\end{equation}
to \textit{exaggerate} the local importance across features from two layers while modeling the interaction between them at the same time.
After this, values from two layers, namely $V_1$ and $V_2$, will be swapped to produce 
 two layers via 
 \begin{equation}
F_1 = W_{sum} \times V_1 + V_2 {,}
\end{equation} 
\begin{equation}
     F_2=W_{sum} \times V_2 + V_1 {,}
 \end{equation} to further guide the interaction between them.  
 GAB's final result  will be obtained by the cross-attention mechanism via 
\begin{equation}
    F_{out} =  softmax(\frac{QK^T}{\sqrt{d}})V {.}
\end{equation} 
Here $Q = F_1W_q$, $K=F_2W_k$, and $V=F_2W_v$. $W_q$, $W_k$, and $W_v$ are trainable matrices. It enhances the global relevance from features generated in consecutive ViT layers.

\section{Experiments}
In this section, the experimental settings are first detailed. Experiment results on three datasets Synapse, ADCD and MoNuSeg will then be presented to demonstrate the effectiveness of the SNet. For fair comparisons, we adopt several SOTA models for better evaluation of the proposed SNet. We also conducted several ablation studies to verify the necessity of each component mentioned in the model.

\subsection{Experimental Setup}
\subsubsection{Datasets}\label{sec:Synapes}
Synapse~\cite{synapse} consists of 30 3D Computed Tomography (CT) scan subjects to segment 13 abdominal organs. Following SwinUnet~\cite{cao2021swin} and TransUnet~\cite{chen2021transunet}, we select 8 annotations, i.e., aorta, gallbladder, spleen, left kidney, right kidney, liver, pancreas, and stomach.
It is noted that we regard the spleen, liver, and stomach as larger organs, the aorta is seen as vascular tissues, and the remaining four organs are shaped relatively smaller~\cite{unetr}.                           
Splits of training and testing sets are also formed by SwinUnet~\cite{cao2021swin} and TransUnet~\cite{chen2021transunet}. The Average Dice-Similarity Coefficient (dice score) and the Hausdorff Distance (HD) are employed to evaluate the model performance.

The ACDC dataset consists of 100 3D cardiac Magnetic Resonance Imaging (MRI) subjects with annotations including the Right Ventricle (RV), Myocardium (Myo), and Left Ventricle (LV). 
Splits of training and testing sets are also formed by SwinUnet~\cite{cao2021swin} and TransUnet~\cite{chen2021transunet}. The Average Dice-Similarity coefficient is employed to evaluate the model performance. 


{MoNuSeg contains 44 images
which are tissue images from different patients and organs and magnified 40 times. The dataset includes approximately 29,000 nuclear boundary annotations. According
to relevant literature, 30 images in this dataset were used for training
the network, and the remaining 14 were used for testing the network. Dice score and IoU are used to evaluate the model performance according to CT-Net~\cite{ZHANG2024298}.}

\subsubsection{Settings}
Our proposed SNet is trained for 300 epochs for the Synapse dataset and 150 epochs for the ACDC dataset on NVIDIA 3080Ti with 12 GB memory based on Pytorch 1.10. No pre-trained weights are employed. During the training, the batch size is set to 12, and the SGD optimizer with momentum 0.9 and weight decay 1e-4 is used. Following~\cite{cao2021swin,chen2021transunet}, we clip the values in the Synapse data  to $[-125, 275]$ which are then normalized to $[0,1]$. At this stage, we treat each slice in 3D subjects as one individual 2D image, and they will be spatially resized to $224\times224$. Common data augmentation techniques including flips and rotations are used to promote data diversity and model robustness. No pre-trained model is used for training.

\subsubsection{Training Strategy}
As seen from Figure~\ref{are}, the final segmentation result $\hat{y}$ will be supervised by optimizing the binary cross entropy ($BCE$) and the dice ($Dice$) losses referring to true annotations. 
To further guide fused feature representation, we leverage the deep supervision  strategy~\cite{zhou2018unet++} on the fused features $\hat{y_{f}}$ by both the losses. 
The final objective to optimize the proposed SNet is given by: $\ell=0.6BEC(\hat y_{f},y)+0.4Dice(\hat y_{f},y)+0.6BCE(\hat y,y)+0.4Dice(\hat y,y)$, where $y$ is the ground-truth.

\subsection{Experimental Results} 

\subsubsection{Results on Synapse Dataset}
In Table~\ref{table:1}, we present the results of the proposed SNet against several state-of-the-art baselines on the Synapse dataset. The proposed SNet achieves the highest dice score and lowest HD scores on average segmentation performance across 8 organs selected. 
When we look into each of the individual organ predictions, we found that improvements brought by SNet are more significant on three organ segmentations including Gallbladder and Kidney (L and R) by $2.13\%$, $2.42\%$, and $1.92\%$ dice score respectively. It also improves the segmentation performance on the Aorta by over $1\%$ in dice score.
Meanwhile, when segmenting larger organs such as the liver and stomach, SNet can still promote the prediction performance by $0.1\%$ and $0.03\% $ in dice score respectively. 
SNet and Missformer obtains comparable performance on spleen segmentation with a $1.93\%$ dice score difference.

From Table~\ref{table:1}, it can be seen that the improvement effect of SNet on large object segmentation is not significant, such as in the Liver and stomach, and even inferior to SwinUnet, CASTformers, and MISSformer on Spleen. However, the descent on the large object is relatively subtle, with a more pronounced improvement observed in smaller objectives. The enhancement of smaller objectives significantly outweighs the decline of larger objectives. 
Although SNet only surpasses CASTformers by a mere 0.5\% dice scores, it also has improvement on small targets.
This is due to a trade-off between general goals and details within the model. It is reasonable to expect some decrease in performance on larger objectives while there is an ascent in performance on smaller objectives. Nevertheless, the overall effect is positive, indicating an improvement in the model's performance.

\begin{sidewaystable}
\tiny
\centering
\caption{
Comparison of SNet and other advanced methods on the Synapse dataset.
Bold indicates the best result, and underline indicates the second best. 
SNet is our model and SNet* is the proposed model with different hyper-parameters. Avg. is the average dice of all the classes.}
\begin{tabular*}{\textwidth}{@{\extracolsep\fill}l|c|c|cccc|ccc|c}
\hline
\multirow{3}{*}{Methods}  &\multirow{3}{*}{HD $\downarrow$} & \multicolumn{9}{c}{\rule{0pt}{7pt}Dice ($\%$) $\uparrow$}  \\  
\cline{3-11}   
& & \multicolumn{1}{c|}{Tissue} & \multicolumn{4}{c|}{Small Target}& \multicolumn{3}{c|}{Large Target}&
\multirow{2}{*}{Avg.}\\
        &   & Aorta          & Gallbladder    & Kidney(L)      & Kidney(R)      & Pancreas          & Liver       & Spleen         & Stomach       \\ \hline
V-Net\cite{milletari2016v}               & -              & 75.34          & 51.87          & 77.10          & 80.75          & 40.05          & 87.84         & 80.56          & 56.98     & 68.81     \\
DARR\cite{fu2020domain}               & -              & 74.74          & 53.77          & 72.31          & 73.24          & 54.18           & 94.08          & 89.90          & 45.96      & 69.77    \\
U-Net\cite{ronneberger2015u}              & 39.70          & 89.07          & 69.72          & 77.77          & 68.60          & 53.98          & 93.43          & 86.67          & 75.58        & 76.85   \\
Att-UNet\cite{oktay2018attention}            & 36.02          & 89.55          & 68.88          & 77.98          & 71.11          & 58.04          & 93.57          & 87.30          & 75.75        & 77.77  \\ \hline
R50 ViT\cite{valanarasu2021medical}            & 32.87          & 73.73          & 55.13          & 75.80          & 72.20          & 45.99           & 91.51          & 81.99          & 73.95    & 71.29       \\
TransUNet\cite{chen2021transunet}           & 31.69          & 87.23          & 63.13          & 81.87          & 77.02          & 55.86          & 94.08           & 85.08          & 75.62    & 77.48      \\
TransClaw\cite{chang2021transclaw}         & 26.38          & 85.87          & 61.38          & 84.83          & 79.36          & 57.65          & 94.28         & 87.74          & 73.55      & 78.09      \\
SwinUnet\cite{cao2021swin}            & 21.55          & 85.47          & 66.53          & 83.28          & 79.61          & 56.58          & 94.29           & 90.66          & 76.60         & 79.13 \\ 
CASTformers\cite{you2022class}  &22.76 & 89.05 & 67.48 & 86.05 & 82.17 & \textbf{67.49} & {95.61}  & \underline{91.00} &  {81.55} & 82.55\\ 
MISSformer\cite{huang2021missformer} & \underline{18.20} & 86.99 & 68.65 & 85.21 & 82.00 & 65.67& 94.41 & \textbf{91.92} & 80.81  & 81.96\\ 
CTC–Net\cite{yuan2023effective} &22.52 & 86.46 & 63.53& 83.71& 80.79 &59.73& 93.78& 86.87& 72.39&78.41\\
HiFormer\cite{heidari2023hiformer}&19.14&87.03&68.61 &   84.23& 78.37& 60.77&94.07&  90.44 &\textbf{82.03}& 80.69\\    
CT-Net\cite{ZHANG2024298}& - & 89.00& 67.70& 84.10 &80.60&  \underline{67.90}& \textbf{96.20} & {90.00} & \textbf{85.00}& 82.60\\
\hline
SNet* (ours)    & 18.85 & \underline{89.58} & \textbf{71.01} & \textbf{88.47} & \underline{84.06} & 63.80 & \underline{95.71} & 88.80 & {81.58} &\underline{82.88}\\
\textbf{SNet (ours)}   & \textbf{15.74} & \textbf{90.19} & \underline{69.48} &\underline{87.48} & \textbf{84.09} & {66.99} & 95.58 & 89.99 & 80.61& \textbf{83.05} \\\hline
\end{tabular*}
\label{table:1}
\end{sidewaystable}

\begin{figure}[h!]
\centerline{\includegraphics[width=\columnwidth]{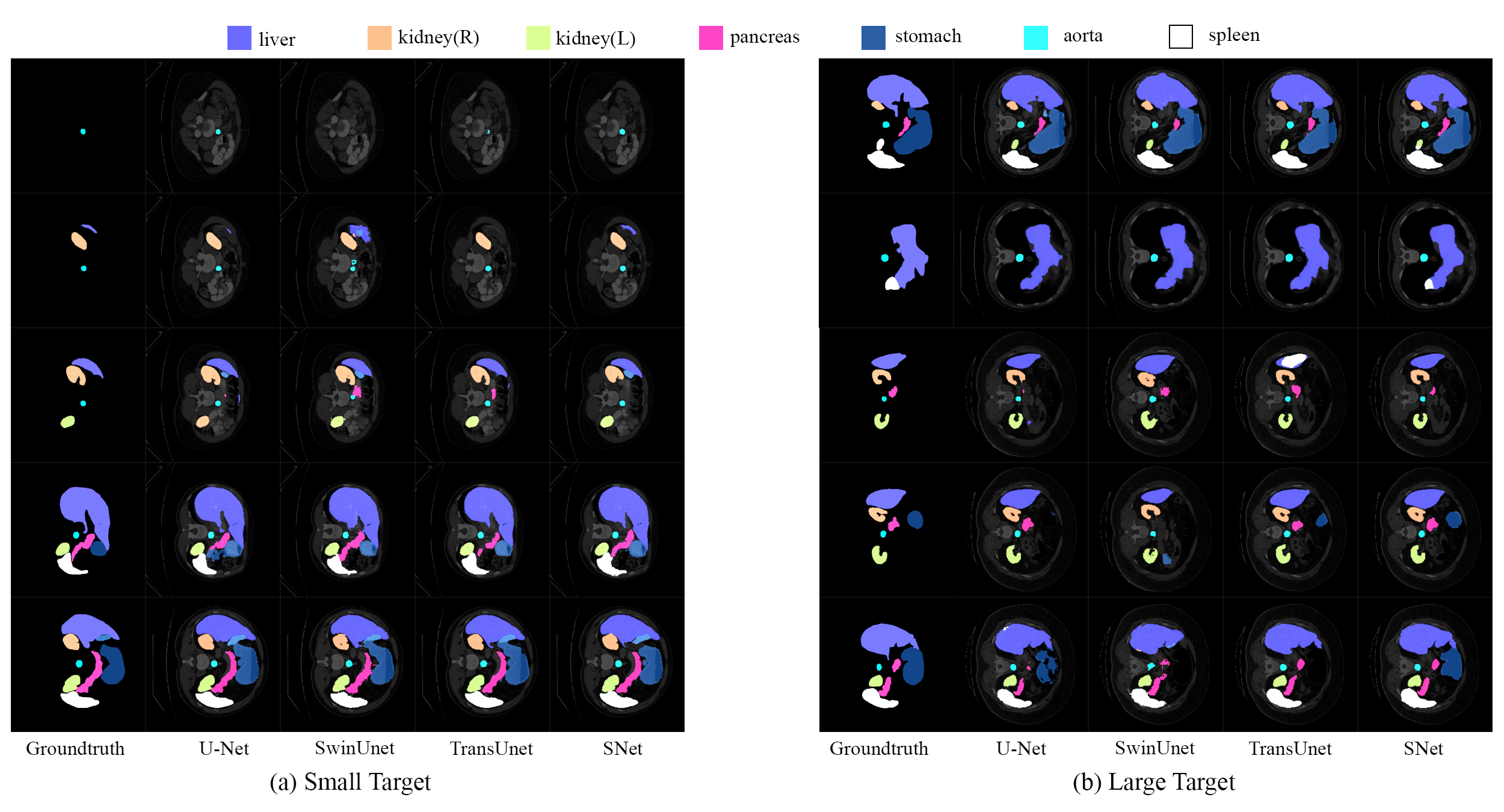}}
\caption{Examples of the prediction results given by SNet, UNet, Swin-UNet, and TransUnet on Synapse validation dataset. The prediction result in the left part shows SNet's superior performance on small targets such as Gallbladder and Pancreas. Meanwhile, it also works well on large targets as illustrated on the right part.}
\label{visual}
\end{figure}

Figure~\ref{visual} visualizes the results presented in Table~\ref{table:1} by the proposed SNet against other competing baselines UNet, SwinUnet, and TransUnet.~\footnote{The visualized models have been trained which yield results equivalent to those reported in the papers. As CASTformers and Missformer have not offered pre-trained models, they are not included in the visualization.}
The result in the left column Figure~\ref{visual} (a) illustrates the superiority of the SNet on small targets such as Gallbladder and Pancreas. Evidently, it offers more accurate predictions without introducing a lot of false detection. Meanwhile, the prediction result in the right column Figure~\ref{visual} (b) illustrates the SNet's excellent performance on large targets such as the Liver. In addition, SNet appears to work well when predicting the detailed parts.

\subsubsection{Results on ACDC Dataset}
The results on the ACDC dataset are summarised in Table~\ref{table:acdc}. We can see that SNet outperforms the other comparative arts by achieving $91.55\%$ dice score, $0.69\%$ dice score better than the highest (Missformer) of previous baselines. 

Similar to its performance on the Synapse dataset, the model shows a noticeable improvement in average dice, enhancing RV and Myo predictions for the two smaller target classes by $1\%$ and $1.57\%$, respectively. However, there is a decline of $1.36\%$ dice score in the LV prediction, a larger target. The results are promising, particularly considering the previously inadequate Myo prediction in all prior models. We have made substantial progress on classes that were challenging to predict, indicating effective handling of information loss. However, due to a trade-off in model performance, there is a decline in the prediction of larger targets. Nevertheless, the overall average dice score demonstrates improvement.

\begin{table}[t!]
\centering
\scriptsize
\caption{Comparison on ACDC data. Bold font indicates the best result, and the second-best results are highlighted underlined. The results of other experiments are original from Missformer. Avg. is the average dice of all the classes.}
\begin{tabular}{c|cccc}
    \hline
    \multirow{2}{*}{Methods} & \multicolumn{4}{c}{Dice ($\%$) $\uparrow$}  \\
    \cline{2-5}
          & RV             & Myo            & LV            & Avg. \\ \hline
    R50 U-Net\cite{chen2021transunet}   & 87.10          & 80.63          & 94.92         & 87.55 \\
    R50 Att-UNet\cite{chen2021transunet} & 87.58          & 79.20          & 93.47         & 86.75 \\
    R50 ViT\cite{valanarasu2021medical}       & 86.07          & 81.88          & 94.75         & 87.57 \\
    TransUnet\cite{chen2021transunet}     & 88.86          & 84.53          & 95.73         & 89.71 \\
    SwinUnet\cite{cao2021swin}      & 88.55          & 85.62          & \textbf{95.83} & 90.00 \\ 
    Missformer\cite{huang2021missformer}   &\underline{89.55}    & 88.04          & \underline{94.99}  & \underline{90.86} \\
    \textbf{SNet (ours)}       & \textbf{90.56} & \textbf{89.61} & 94.47         & \textbf{91.55} \\
    \hline
\end{tabular}
\label{table:acdc}
\end{table}

\subsubsection{Results on MoNuSeg Dataset}
{As shown in Table~\ref{tab:monuseg_comparison}, the proposed SNet achieved the best results on the
MoNuSeg dataset compared to other models in the table, with IoU and
dice score of 69.6 and 81.9, respectively. Many small targets can be seen in each image in the MoNuSeg dataset. 
SNet outperformed the CT-Net~\cite{ZHANG2024298}, the recent SOTA, with a 1\% increase in IoU and a 0.6\% increase in dice score.}
\begin{table}[ht]
\caption{Comparison of SNet and other advanced methods on the MoNuSeg dataset.}
    \scriptsize
    \centering
    \begin{tabular}{lcc}
        \hline
        Methods                               & IoU ($\%$) $\uparrow$   & Dice ($\%$) $\uparrow$    \\
        \hline
        UNet\cite{ronneberger2015u}         & 59.40  & 74.00  \\
        Att-UNet\cite{oktay2018attention}    & 62.60  & 76.20  \\
        TransUNet\cite{chen2021transunet}           & 65.70  & 79.20  \\
        SwinUNet\cite{cao2021swin}          & 64.70  & 78.50  \\
        UCTransNet-pre\cite{wang2022uctransnet}  & 63.80  & 77.20  \\
        ATTransUNet\cite{li2023attransunet}         & 65.50  & 79.20  \\
        CT-Net\cite{ZHANG2024298}                         & \underline{66.50}  & \underline{79.80}  \\
        \textbf{SNet(ours)}  &\textbf{69.60}&\textbf{81.90}\\
        \hline
    \end{tabular}
    \label{tab:monuseg_comparison}
\end{table}

\begin{table}[]
\label{params}
\caption{Comparison on parameters of different models.}
\begin{tabular}{c|cccccccc}
\hline
Models & U-Net\cite{ronneberger2015u} & Att-UNet\cite{oktay2018attention} & R50 ViT\cite{valanarasu2021medical} & TransUnet\cite{chen2021transunet}  \\
\hline
Params & 7.2M   & 19.8M           & 488.25M & 105.3M     \\
\hline
Models& SwinUnet\cite{cao2021swin} & Missformer\cite{huang2021missformer}&HiFormer\cite{heidari2023hiformer} & SNet (ours) \\
\hline
Params     & 41.4M      & 40.5M     &29.52M  & 38.7M \\
\hline
\end{tabular}
\end{table}

\subsection{Ablation Study}
To validate the necessity of the stagger fusion, as well as contributing sub-components including FFB, FEB, and GAB, we conduct the following ablation studies by evaluating predictions on the Synapse dataset against scenarios when the upon-mentioned components are disabled.  

\subsubsection{Effect of the Stagger Fusion}  
We verify the effectiveness of stagger fusion by using
a U-shaped model employing unstagger fusion, wherein features were merged from comparable CNN and ViT layers. 
This comparative model also integrated sub-components such as FEB, FFB, and GAB, ensuring an apples-to-apples comparison by keeping all other training settings constant.
The results reported in Table~\ref{stagger} indicate that stagger fusion improves predictions by $4.16\%$ dice score, which thus verifies its effectiveness.

\begin{table}[]
\centering
\scriptsize
\caption{Performance comparison between the stagger fusion and unstagger fusion. Avg. is the average dice of all the classes. The bold values denote the best scores.}
\begin{tabular}{c|c|ccc|c}
\hline
\multirow{2}{*}{Model} & \multirow{2}{*}{HD $\downarrow$} &  \multicolumn{4}{c}{Dice ($\%$) $\uparrow$} \\
\cline{3-6}
&&  Tissues & Small & Large &Avg.\\ \hline
Unstagger         &   30.11 &89.56 &70.29   &86.54 &  78.79 \\
\textbf{Stagger (ours)}       &  \textbf{15.74} & \textbf{90.19} &\textbf{77.01} &\textbf{88.73} &  \textbf{83.05} \\
\hline
\end{tabular}
\label{stagger}
\end{table}

As shown in Figure~\ref{banner} (a), features selected in the stagger fusion are more similar. In contrast, features selected in the unstagger fusion are characterized by diversity with significant semantic gaps.
Stagger fusion provides meaningful information and preserves the characteristics of each feature map. However, unstagger fusion leads to the overshadowing of certain feature characteristics. It means that there will be a large information loss after fusion. This could be detrimental, especially in tasks requiring fine-grained feature discernment. As a result, unstagger fusion models may not be able to segment small-sized targets.

subsubsection{Effect of the FEB}
In Table~\ref{ABFEM}, we present the average results of the backbone with stagger fusion on tissues, small, and large targets against the scenario when FEB is off. It can be observed that the prediction can be improved by 0.32\%, 7.32\%, and $1.91\%$ on dice score on tissue, as well as small and large targets respectively. On average, the dice score can be promoted by 3.68\%, demonstrating that the proposed FEB is useful.
\begin{table}[t!]
\centering
\scriptsize
\caption{Effect of FEB, FFB, and GAB on tissues, small targets, and large targets. Avg. is the average dice of all the classes. The bold values denote the best scores.}
\begin{tabular}{cccc|ccc|c}
\hline
\multirow{2}{*}{Stagger}&\multirow{2}{*}{FEB}&\multirow{2}{*}{FFB}&\multirow{2}{*}{GAB}  &\multicolumn{4}{c}{Dice (\%) $\uparrow$}    \\
\cline{5-8}
&&&&Tissues&Small&Large &Avg. \\ \hline
\Checkmark   &-&-& -    & 87.11  &67.21& 84.22 &  76.81      \\
\Checkmark&\Checkmark&-&  -   & 87.43& 74.53&86.13 & 80.49 \\
\Checkmark&\Checkmark&\Checkmark&    -  &  89.38&74.87&86.32 &80.97  \\
\Checkmark&\Checkmark&\Checkmark&\Checkmark    &\textbf{90.19}&\textbf{77.01}&\textbf{88.73} &  \textbf{83.05}\\
\hline 
\end{tabular}
\label{ABFEM}
\end{table}

\subsubsection{Effect of the FFB}
We evaluate the effectiveness of the proposed novel FFB by adding it to the backbone presented in FEB. As seen from Table~\ref{ABFEM}, we can find that introducing FFB will bring great improvement in tissue predictions, i.e., promoting the dice score from $87.43\%$ to $89.38\%$. 
Improvements on both small and large targets are over $0.2\%$, clearly showing that the novel FFB is of crucial importance.

\subsubsection{Effect of the GAB} 
We further evaluate the effectiveness of the proposed GAB by recovering it into the backbone, becoming the proposed SNet.
Seen from Table~\ref{ABFEM}, predictions on tissues, small, and large targets are promoted by $0.81\%$, $2.14\%$, and $2.41\%$ dice score respectively. Particularly, prediction on large targets achieves the most significant improvement. In summary, sub-components in the proposed SNet including stagger fusion, FEB, FFB, and GAB are all necessary to achieve improvement on various-sized targets in the medical image segmentation field.

\section{Conclusion}
In this paper, we propose the SNet to segment various-sized medical imaging targets. To be specific, we design the Parallel Module to avoid early fusion and thus alleviate information loss at the early stage, the Stagger Module to fusion the similar distribution from CNNs and ViTs to address possible semantic gaps and decrease the information loss,  and the Information Recovery Module to retrieve complementary information. To incorporate with the proposed SNet, we also engage the Feature Enhancement Module, the Feature Fusion Module, and the Global Attention Module to enhance feature representations. We further theoretically prove that  
stagger fusion combining deep CNN and early ViT features will excel superiority compared to the unstagger approach. Extensive experiments have demonstrated the effectiveness of the SNet and the necessity of those sub-components.

\backmatter

\bmhead{Acknowledgements}

The work was partially supported by the following: National Natural Science Foundation of China under No.
92370119, and No. 62376113; Jiangsu Science and Technology Program (Natural Science Foundation
of Jiangsu Province) under No. BE2020006-4;
XJTLU Research Development Funding 20-02-60. Computational resources used in this research are provided by the School of Robotics, XJTLU Entrepreneur College (Taicang), Xi'an Jiaotong-Liverpool University.


\bibliography{sn-article}

\end{document}